\newtheorem{definition}{Definition}
\newtheorem{theorem}{Theorem}
\newcommand{\captionfonts}{\footnotesize}
\long\def\@makecaption#1#2{%
  \vskip\abovecaptionskip
  \sbox\@tempboxa{{\captionfonts #1: #2}}%
  \ifdim \wd\@tempboxa >\hsize
    {\captionfonts #1: #2\par}
  \else
    \hbox to\hsize{\hfil\box\@tempboxa\hfil}%
  \fi
  \vskip\belowcaptionskip}
\begin{document}
\title{Quantum Entanglement in Concept Combinations}
\author{Diederik Aerts and Sandro Sozzo \vspace{0.5 cm} \\ 
        \normalsize\itshape
        Center Leo Apostel for Interdisciplinary Studies \\
        \normalsize\itshape
        Brussels Free University \\ 
        \normalsize\itshape
         Krijgskundestraat 33, 1160 Brussels, Belgium \\
        \normalsize
        E-Mails: \url{diraerts@vub.ac.be,ssozzo@vub.ac.be} \\
               }
\date{}
\maketitle
\begin{abstract}
\noindent
Research in the application of quantum structures to cognitive science confirms that these structures quite systematically appear in the dynamics of concepts and their combinations and quantum-based models faithfully represent experimental data of situations where classical approaches are problematical. In this paper, we analyze the data we collected in an experiment on a specific conceptual combination, showing that Bell's inequalities are violated in the experiment. We present a new refined entanglement scheme to model these data within standard quantum theory rules, where `entangled measurements and entangled evolutions' occur, in addition to the expected `entangled states', and present a full quantum representation in complex Hilbert space of the data. This stronger form of entanglement in measurements and evolutions might have relevant applications in the foundations of quantum theory, as well as in the interpretation of nonlocality tests. It could indeed explain some non-negligible `anomalies' identified in EPR-Bell experiments.
\end{abstract}
\medskip
{\bf Keywords}: Entanglement, Bell inequalities, Quantum cognition, EPR-Bell experiments

\section{Introduction\label{intro}}

Two years ago we started the study of the structure of the combination of two concepts, the concept {\it Animal}, and the concept {\it Acts}, in the sentence {\it The Animal Acts}, by means of experiments with human subjects \cite{as2011}. Inspired by the type of coincidence experiments done in physics on compound quantum systems, giving rise to the identification of entanglement in such compound quantum systems, our investigation of {\it The Animal Acts} employed similar coincidence experiments. In the statistics of the experimental data we collected, we identified a violation of Bell's inequalities, very resembling to the violations of this inequality found in quantum physics \cite{bell1964}, and announced this finding as `the identification of entanglement in concept combinations' \cite{as2011}. In the present article we put forward additional elements of this cognitive entanglement that we have investigated meanwhile in great detail, and construct a full quantum mechanical representation in complex Hilbert space of the experimental data. As we will make clear in the following, our experimental cognitive violation of Bell's inequality made us gain quite some new insights into the nature and understanding of entanglement situations violating Bell's inequality, also relevant for their interpretation in micro-physics. 

We mention shortly the scientific context in which this research takes place. For our Brussels group, the inspiration to search for the identification of quantum structures in human cognition followed from a general investigation of classical and quantum probability structures some decades ago. More specifically, we were inspired by the problem of hidden-variable theories in quantum physics, i.e. pondering the question whether classical probability can reproduce the predictions of quantum mechanics \cite{aerts1986,aerts1992,aerts1994,aerts1998b,aerts1999b}. Understanding the fundamental characteristics of classical and quantum probability led us to identify situations in the macroscopic world entailing structures that are characteristic of quantum systems \cite{aertsaerts1994,aertsbroekaertsmets1999a,aertsaertsbroekaertgabora2000}. It was in this mindset that we developed a quantum modeling scheme for concepts and their combinations \cite{aertsgabora2005a,aertsgabora2005b}. How concepts combine and interact with each other and with large pieces of text, remains indeed one of the important unsolved problems in cognitive science, and constitutes one of the missing cornerstones for a deeper understanding of human thought itself. A key element of this situation was identified firstly in the so-called `Guppy effect' by Osherson and Smith \cite{oshersonsmith81}, when they observed that for the concepts {\it Pet} and {\it Fish} and their conjunction {\it Pet-Fish}, while an exemplar such as {\it Guppy} is very typical of {\it Pet-Fish}, it is neither typical of {\it Pet} nor of {\it Fish}. This Guppy effect was studied again in Hampton's experiments \cite{hampton88a,hampton88b}, with a focus on `membership weight' instead of `typicality', and measuring the deviation from classical set (fuzzy set) membership weights of exemplars with respect to pairs of concepts and their conjunction or disjunction. However, it is generally accepted that none of the currently existing concepts theories provides a satisfactory description of such effects \cite{oshersonsmith02,komatsu01,fodor01,rips03}.

Inspired by our previous work on quantum axiomatics, we elaborated the `SCoP  formalism' to model concepts and cope with the above
problems \cite{aertsgabora2005a,aertsgabora2005b}. In the SCoP formalism, each concept is associated with sets of states, contexts and properties. Concepts change continuously under the influence of context and this change is described by a change of the state of the concept. The SCoP formalism, when complex Hilbert space is used to represent the states, entails `a quantum modeling scheme for concepts', and was as such successively applied to model concept combinations accounting for the guppy effect as arising from the above mentioned quantum effects, and explicitly modeling Hampton's experimental data \cite{as2011,aerts2009,aertsdhooghehaven2010,abgs2012,ags2012}. We finish this short oversight by mentioning that these findings naturally fit in the emerging domain of research called `quantum cognition', consisting in the application of quantum structures in cognitive science  (see, e.g., \cite{bruzaetal2007,bruzaetal2008,bruzaetal2009,pb2009,k2010,songetal2011,bpft2011,bb2012,busemeyeretal2012}).

The focus of the present paper is the quantum modeling of the experimentally collected data on the situation of two specific concepts, {\it Animal} and {\it Acts}, and their combination {\it The Animal Acts} \cite{as2011,ags2012}. We first again describe explicitly this experiment in Sect. \ref{experiment} where we also show how Bell's inequalities \cite{bell1964,chsh69} are violated by the measured expectation values. In Sections \ref{theoretical}, \ref{calculations} and \ref{EPR_Bell} we put forward the new elements of our investigation, i.e. a presentation of a generalised entanglement scheme, a full quantum complex Hilbert space modeling of the data, and possible influence of our findings on entanglement in physics.

\section{Description of the experiment\label{experiment}}
In 1964 John Bell proved that, if one introduces local realism as a reasonable hypothesis for a physical theory, then one can derive an inequality for the expectation values of suitable physical observables (`Bell's inequality') which is violated in quantum mechanics \cite{bell1964}. This violation is due to a feature of quantum mechanics which is called `entanglement'. But, the violation of Bell's inequalities also proves the impossibility to cast quantum probabilities into a unique classical Kolmogorovian space \cite{aerts1986,af1982,pitowsky1989}. One generally concludes that, because of entanglement, one cannot consider the component parts of a composite quantum entity separately, but the entity must be described as an undivided whole. 
We aim to illustrate in this section how entanglement appears in the combination of human concepts as due to the violation of Bell's inequalities. 

We regard the sentence {\it The Animal Acts} as a combination of the concepts {\it Animal} and {\it Acts}. Then, 
we consider two pairs of exemplars, or states, of {\it Animal}, namely {\it Horse}, {\it Bear} and {\it Tiger}, {\it Cat}, and two pairs of exemplars, or states, of {\it Acts}, namely {\it Growls}, {\it Whinnies} and {\it Snorts}, {\it Meows}. By {\it Acts} we thus mean the specific action of {\it Making A Sound}. We introduce the single experiments $A$ and $A'$ for the concept {\it Animal}, and $B$ and $B'$ for the concept {\it Acts}. Experiment $A$ consists in participants choosing between {\it Horse} and {\it Bear} answering the question `is a good example of' {\it Animal}. We put as outcome $\lambda_H=+1$ if {\it Horse} is chosen, hence the state of {\it Animal} changes to {\it Horse}, and $\lambda_B=-1$ if {\it Bear} is chosen, hence the state of {\it Animal} changes to {\it Bear}. Experiment $A'$ consists in participants choosing between {\it Tiger} and {\it Cat} answering the question `is a good example of' {\it Animal}. We consistently put $\lambda_T=+1$ if {\it Tiger} is chosen and $\lambda_C=-1$ if {\it Cat} is chosen. Experiment $B$ consists in participants choosing between {\it Growls} and {\it Whinnies} answering the question `is a good example of' {\it Acts}. We put $\lambda_G=+1$ if {\it Growls} is chosen and $\lambda_W=-1$ if {\it Whinnies} is chosen. Experiment $B'$ consists in participants choosing between {\it Snorts} and {\it Meows} answering the question `is a good example of' {\it Acts}. Again, we put $\lambda_S=+1$ if {\it Snorts} is chosen and $\lambda_M=-1$ if {\it Meows} is chosen. 

Let us now come to the coincidence experiments $AB$, $AB'$, $A'B$ and $A'B'$ for the conceptual combination {\it The Animal Acts}. In all experiments, we ask test subjects to answer the question `is a good example of' the concept {\it The Animal Acts}. In experiment $AB$, participants choose among the four possibilities (1) {\it The Horse Growls}, (2) {\it The Bear Whinnies} -- and if one of these is chosen we put $\lambda_{HG}=\lambda_{BW}=+1$ -- and (3) {\it The Horse Whinnies}, (4) {\it The Bear Growls} -- and if one of these is chosen we put $\lambda_{HW}=\lambda_{BG}=-1$. In experiment $AB'$, they choose among (1) {\it The Horse Snorts}, (2) {\it The Bear Meows} -- and in case one of these is chosen we put $\lambda_{HS}=\lambda_{BM}=+1$ -- and (3) {\it The Horse Meows}, (4) {\it The Bear Snorts} -- and in case one of these is chosen we put $\lambda_{HS}=\lambda_{BM}=-1$. In experiment $A'B$, they choose among (1) {\it The Tiger Growls}, (2) {\it The Cat Whinnies} -- and in case one of these is chosen we put $\lambda_{TG}=\lambda_{CW}=+1$ -- and (3) {\it The Tiger Whinnies}, (4) {\it The Cat Growls} -- and in case one of these is chosen we put $\lambda_{TW}=\lambda_{CG}=-1$. 
Finally, in experiment $A'B'$, participants choose among (1) {\it The Tiger Snorts}, (2) {\it The Cat Meows} -- and in case one of these is chosen we put $\lambda_{TS}=\lambda_{CM}=+1$ -- and (3) {\it The Tiger Meows}, (4) {\it The Cat Snorts} -- and in case one of these is chosen we put $\lambda_{TM}=\lambda_{CS}=-1$.

We evaluate now the expectation values $E(A,B)$, $E(A, B')$, $E(A', B)$ and $E(A', B')$ associated with the experiments $AB$, $AB'$, $A'B$ and $A'B'$ respectively, and insert the values into the Clauser-Horne-Shimony-Holt (CHSH) version of Bell's inequality  

\vspace{-1mm}
\begin{equation} \label{chsh}
-2 \le E(A',B')+E(A',B)+E(A,B')-E(A,B) \le 2
\end{equation}

\vspace{-1mm}
\noindent
\cite{chsh69}. 
We performed a concrete experiment involving 81 participants who were presented a questionnaire to be filled out in which they were asked to make a choice among the above alternatives in the experiments $A$, $B$, $A'$ and $B'$, and also $AB$, $AB'$, $A'B$ and $A'B'$. 
Table \ref{tab} contains the results of our experiment \cite{as2011}.

If we denote by $P(A_1,B_1)$, $P(A_2,B_2)$, $P(A_1,B_2)$, $P(A_2,B_1)$, the probability that {\it The Horse Growls}, {\it The Bear Whinnies},  
{\it The Horse Whinnies}, {\it The Bear Growls}, respectively, is chosen in the coincidence experiment $AB$, and so on in the other experiments, the expectation values are, in the large number limits, 

\vspace{-5mm}
\begin{eqnarray}
&E(A,B)=&P(A_1,B_1)+P(A_2,B_2)-P(A_2,B_1)-P(A_1,B_2)=-0.7778  \nonumber \\
&E(A',B)=&P(A'_1,B_1)+P(A'_2,B_2)-P(A'_2,B_1)-P(A'_1,B_2)=0.6543 \nonumber \\
&E(A,B')=&P(A_1,B'_1)+P(A_2,B'_2)-P(A_2,B'_1)-P(A_1,B'_2)=0.3580 \nonumber \\
&E(A',B')=&P(A'_1,B'_1)+P(A'_2,B'_2)-P(A'_2,B'_1)-P(A'_1,B'_2)=  0.6296 \nonumber
\end{eqnarray}

\vspace{-1mm}
\noindent
Hence, Eq. (\ref{chsh}) gives 
\begin{equation}
E(A',B')+E(A',B)+E(A,B')-E(A,B)=2.4197
\end{equation}
which is greater than 2. This entails that (i) it violates Bell's inequalities, and (ii) the violation is close the maximal possible violation in quantum theory, viz. $2\cdot\sqrt{2} \approx 2.8284$.
\begin{table} \label{tab}
\centering
\begin{footnotesize}
\begin{tabular}{|c |c | c | c| c| }
\hline
single \textrm{$A$}, \textrm{$B$}& \emph{Horse}& \emph{Bear}& \emph{Growls}& \emph{Whinnies}\\
 & $P(A_1)=0.5309$ &$P(A_2)=0.4691$ & $P(B_1)=0.4815$ & $P(B_2)=0.5185$ \\
\hline
single \textrm{$A'$}, \textrm{$B'$}& \emph{Tiger}& \emph{Cat}& \emph{Snorts}& \emph{Meows}\\
 & $P(A_1)=0.7284$ &$P(A_2)=0.2716$ & $P(B_1)=0.3210$ & $P(B_2)=0.6790$ \\
 \hline
coincidence \textrm{$AB$} & \emph{Horse Growls} & \emph{Horse Whinnies} & \emph{Bear Growls} & \emph{Bear Whinnies}\\
 & $P(A_1,B_1)=0.049$ & $P(A_1,B_2)=0.630$ & $P(A_2,B_1)=0.259$ & $P(A_2,B_2)=0.062$  \\
\hline
coincidence \textrm{$AB'$} & \emph{Horse Snorts} & \emph{Horse Meows} & \emph{Bear Snorts} & \emph{Bear Meows}\\
& $P(A_1,B'_1)=0.593$ & $P(A_1, B'_2)=0.025$ & $P(A_2,B'_1)=0.296$   & $P(A_2,B'_2)=0.086$ \\
\hline
coincidence \textrm{$A'B$} & \emph{Tiger Growls} & \emph{Tiger Whinnies} & \emph{Cat Growls} & \emph{Cat Whinnies}\\
 & $P(A'_1,B_1)=0.778$ & $P(A'_1, B_2)=0.086$ & $P(A'_2,B_1)=0.086$  &  $P(A'_2,B_2)=0.049$ \\
\hline
coincidence \textrm{$A'B'$} & \emph{Tiger Snorts} & \emph{Tiger Meows} & \emph{Cat Snorts} & \emph{Cat Meows}\\
 & $P(A'_1,B'_1)=0.148$ & $P(A'_1, B'_2)=0.086$ & $P(A'_2,B'_1)=0.099$ & $P(A'_2,B'_2)=0.667$\\
\hline
\end{tabular}
\caption{The data collected in the single and coincidence experiments on entanglement in concepts \cite{as2011}.}
\end{footnotesize}
\end{table}
Hence, the violation is very significant 
because it can be shown that effects of disturbance of the experiment push the value of the Bell expression in Eq. (\ref{chsh}) toward a value between -2 and +2. To see how small the probability is that the resulting violation of Bell's inequality would be due to chance, we calculated the p value with a single samples t-test against the value 2. We found p = 0.0171 manifestly below 0.05. Hence, the null hypothesis, i.e. that the value is in the interval [-2,+2], and only probability fluctuations would give us for this specific test of our 81 individuals the value 2.4197, is very improbable, namely smaller than 0.0171. Hence this null hypothesis should be rejected, and our identified violation considered as genuine.   

\section{A Quantum Representation in Complex Hilbert space\label{theoretical}}
Before we put forward our quantum representation in complex Hilbert space of the collected data, we need to analyse in depth some of the notions related to entanglement and the quantum formalism. Indeed, as it will show quickly, our experimental data force us to touch in a new and surprising way  at these basic notions, and use them in a more subtle than usual manner, remaining however completely compatible with the quantum formalism in its essence. To introduce this new way, let us first remark that the coincidence experiments $AB$, $AB'$, $A'B$ and $A'B'$, are experiments with four outcomes each, and four states each to collapse to when one of the outcomes is secured. This means that in essence their measurement and evolution dynamics is described in a four dimensional complex Hilbert space, following the standard rules of the quantum formalism. On the other hand, the experiments $A$, $B$, $A'$ and $B'$ are experiments with two outcomes each, and two states to collapse to when one of the outcomes is secured. This means that in essence their measurement and evolution dynamics is described in a two dimensional complex Hilbert space, following the standard rules of the quantum formalism. The above is equally so for the in physics well-known and for the notion of entanglement archetypical situation of two spin 1/2 quantum particles with entangled spins. We know that then, for this well-known physics situation of entangled spins, the four dimensional complex Hilbert space is taken to be isomorphic to the tensor product of the two dimensional complex Hilbert spaces. And, it is by means of this isomorphism that the spins are described as sub entities of the compound entity of coupled spins, and entanglement is identified in the states of the compound entity, which hence becomes an entity of two coupled spins. More specifically, entanglement is identified by the presence of non-product vectors in this tensor product, and it are these non-product vectors that also give rise to the violation of typical Bell-type inequalities.

Physicists, and other scientists, who have been working for many decades now mainly with this archetypical coupled spin entangled situation, have come to believe that what we describe above is all there is to, i.e. `one takes the tensor product for the set of states of the compound entity, and entanglement appears on the scene, expressed mathematically by the presence of non-product vectors in this tensor product. And, additionally, the violation of Bell's inequality expresses how such non-product vectors of the tensor product produce entanglement.' Our data, collected on the compound entity {\it The Animal Acts}, have forced us to see the too limited scope of the above described entanglement scheme. Moreover, the scheme is not too limited in the sense that a generalisation of standard quantum theory is at stake. No, standard quantum theory, when situations of entanglement are analysed carefully, contains a more general setting for entanglement in its full standard version. Hence, the limitation in the customary entanglement scheme is not due to standard quantum theory, but to taking the coupled spins as archetypes for all entanglement situations. What is more, we have good reasons to believe that also the physics situation of the coupled spins is in need of the more general scheme that we will present in the following for our data on {\it The Animal Acts}.

We start the introduction of our more general entanglement scheme with a first remark. Mathematically ${\mathbb C}^4$ is isomorphic to ${\mathbb C}^2\otimes{\mathbb C}^2$, since both are four dimensional complex Hilbert spaces, and hence isomorphism can immediately be inferred by just linking two orthonormal (ON) bases in both spaces. And ${\mathbb C}^4$, and hence also ${\mathbb C}^2\otimes{\mathbb C}^2$, models the states of the considered compound entity, whether this compound entity is the coupled spins, or the concept combination {\it The Animal Acts}. The measurements and dynamical evolutions however of these entities are represented by linear operators -- respectively self-adjoint ones and unitary ones -- of the respective Hilbert spaces within a standard quantum theory modeling. There is another less well-known mathematical isomorphic correspondence, namely the set of all linear operators of ${\mathbb C}^4$, let is denote it by $L({\mathbb C}^4)$, is isomorphic to the tensor product of the two sets of all linear operators of ${\mathbb C}^2$, denoted $L({\mathbb C}^2)$, i.e. $L({\mathbb C}^4)\cong L({\mathbb C}^2)\otimes L({\mathbb C}^2)$. From standard quantum theory point of view, and certainly taking into account its mathematical structure, there would hence be no reason at all to suppose that the self-adjoint operators representing measurements on the compound entity, or the unitary operators, representing dynamical evolutions of the compound entity, would not be entangled, where `entangled operators' mean `non product operators'. Product operators do pertain to $L({\mathbb C}^2)\otimes L({\mathbb C}^2)$, but they represent equally special and simple cases for the operators of $L({\mathbb C}^2)\otimes L({\mathbb C}^2)$ as product vectors are special and simple cases for the vectors of ${\mathbb C}^2\otimes{\mathbb C}^2$. The investigation of our `animal acts' experimental data has shown us that the above mentioned operator-linked entanglement is not just a mathematical artefact of quantum theory, but also appears in nature. More specifically, the statistical structure of the data force the presence of entanglement that cannot be expressed in the state alone by means of non product vectors, but needs non product operators -- self-adjoint non-product operators for the measurements and unitary non-product operators for the dynamical evolutions -- to be modeled. Moreover, there are serious reasons to believe that also for the coupled spins such a broader range of operator-linked entanglement is present. We emphasise the latter since, if true, it constitutes an intriguing example of how research in quantum cognition can enlighten situations in physics, we comment more on this in Sect. \ref{EPR_Bell}. Let us introduce now the necessary mathematical items and theorem to prove what we have stated so far.

We define the notions of `product state', `product measurement' and `product dynamical evolution' as we will use it in our entanglement scheme. For this we consider the general form of an isomorphism $I$ between ${\mathbb C}^4$ and ${\mathbb C}^2 \otimes {\mathbb C}^2$, by linking the elements of an ON basis $\{|x_1\rangle, |x_2\rangle, |x_3\rangle, |x_4\rangle \}$ of ${\mathbb C}^4$ to the elements $\{|c_1\rangle\otimes|d_1\rangle, |c_1\rangle\otimes |d_2\rangle, |c_2\rangle\otimes|d_1\rangle, |c_2\rangle\otimes|d_2\rangle\}$ of the type of ON basis of ${\mathbb C}^2 \otimes {\mathbb C}^2$ where $\{|c_1\rangle, |c_2\rangle\}$ and $\{|d_1\rangle, |d_2\rangle\}$ are ON bases of ${\mathbb C}^2$ each
\begin{eqnarray}
&&I: {\mathbb C}^4 \rightarrow {\mathbb C}^2 \otimes {\mathbb C}^2 \\
&&I|x_1\rangle=|c_1\rangle\otimes|d_1\rangle \quad I|x_2\rangle=|c_1\rangle\otimes |d_2\rangle \label{corr1}\\ 
&&I|x_3\rangle=|c_2\rangle\otimes|d_1\rangle \quad I|x_4\rangle=|c_2\rangle\otimes|d_2\rangle \label{corr2}
\end{eqnarray}
Let us indicate some general properties of an isomorphism between Hilbert spaces. Suppose we consider a linear operator $A$ in ${\mathbb C}^4$, then the image of this operator in ${\mathbb C}^2 \otimes {\mathbb C}^2$ through $I$ is given by $IAI^{-1}$. If $|w\rangle$ is an eigenvector of $A$ with eigenvalue $\lambda$, then $I|w\rangle$ is an eigenvector of $IAI^{-1}$ with the same eigenvalue $\lambda$. This follows right away from the calculation $IAI^{-1}(I|w\rangle)=IA|w\rangle=\lambda I|w\rangle$. An isomorphism conserves length and orthogonality of vectors. 
\begin{definition}
A state $p$ represented by the unit vector $|p\rangle \in {\mathbb C}^4$ is a `product state', with respect to $I$, if there exists two states $p_a$ and $p_b$, represented by the unit vectors $|p_a\rangle \in {\mathbb C}^2$ and $|p_b\rangle \in {\mathbb C}^2$, respectively, such that $I|p\rangle=|p_a\rangle\otimes|p_b\rangle$. Otherwise, $p$ is an `entangled state' with respect to $I$.
\end{definition}
\begin{definition}
A measurement $e$ represented by a self-adjoint operator ${\cal E}$ in ${\mathbb C}^4$ is a `product measurement', with respect to $I$, if there exists
measurements $e_a$ and $e_b$, represented by the self-adjoint operators  ${\cal E}_a$ and ${\cal E}_b$, respectively, in ${\mathbb C}^2$ such that $I{\cal E}I^{-1}={\cal E}_a \otimes {\cal E}_b$. Otherwise, $e$ is an `entangled measurement' with respect to $I$.
\end{definition}
\begin{definition}
A dynamical evolution $u$ represented by a unitary operator ${\cal U}$ in ${\mathbb C}^4$ is a `product evolution', with respect to $I$, if there exists
dynamical evolutions $u_a$ and $u_b$, represented by the unitary operators operators  ${\cal U}_a$ and ${\cal U}_b$, respectively, in ${\mathbb C}^2$ such that $I{\cal U}I^{-1}={\cal U}_a \otimes {\cal U}_b$. Otherwise, $u$ is an `entangled evolution' with respect to $I$.
\end{definition}
Remark that the notion of product states, measurements and evolutions, are defined with respect to the considered isomorphism between ${\mathbb C}^4$ and ${\mathbb C}^2 \otimes {\mathbb C}^2$. This expresses well the essence of the physical content of what entanglement is, namely a `name giving' to non-product structures appearing in an identification procedure of sub entities of a considered compound entity applying standard quantum theory. It is natural that such a name giving depends on the isomorphism considered in this sub entity identification procedure. This means more concretely that there is freedom relative to a unitary transformation for different identification procedures that are equivalent following standard quantum theory rules. That there are operator-linked types of entanglement, not reducible to customary state-linked entanglement, and that entanglement depends on the isomorphism identifying sub entities, are the crucial insights that were forced upon us during our struggle to model the data we collected from our `animal acts' experiments. That the way to identify sub entities of the compound entity is not unique following standard quantum theory results in an entanglement scheme substantially more general and elaborate that the customary known and used one. We continue by proving some necessary theorems.
\begin{theorem}\label{th1}
The spectral family of a self-adjoint operator ${\cal E}=I^{-1}{\cal E}_a \otimes {\cal E}_bI$ representing a product measurement with respect to $I$, has the form $\{I^{-1}|a_1\rangle \langle a_1|\otimes |b_1\rangle \langle b_1|I$, $I^{-1}|a_1\rangle \langle a_1| \otimes |b_2\rangle \langle b_2|I$, $I^{-1}|a_2\rangle \langle a_2| \otimes |b_1\rangle \langle b_1|I, I^{-1}|a_2\rangle \langle a_2| \otimes |b_2\rangle \langle b_2|I\}$, where $\{|a_1\rangle \langle a_1|, |a_2\rangle \langle a_2|\}$ is a spectral family of ${\cal E}_a$ and $\{|b_1\rangle \langle b_1|, |b_2\rangle \langle b_2|\}$ is a spectral family of ${\cal E}_b$. 
\end{theorem}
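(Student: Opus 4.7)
The plan is to reduce the statement to a computation in ${\mathbb C}^2 \otimes {\mathbb C}^2$ via the isomorphism $I$, and then invoke spectral theory of product operators. Since $\mathcal{E} = I^{-1}(\mathcal{E}_a \otimes \mathcal{E}_b) I$, and since the paragraph preceding the theorem already records that conjugation by $I$ preserves eigenvectors and eigenvalues, it is enough to produce the spectral decomposition of $\mathcal{E}_a \otimes \mathcal{E}_b$ acting on ${\mathbb C}^2 \otimes {\mathbb C}^2$; applying $I^{-1}\cdot I$ at the very end will then transport the answer to ${\mathbb C}^4$ and yield exactly the four operators listed in the statement.

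First I would write the spectral decompositions of the two factors: $\mathcal{E}_a = \alpha_1 |a_1\rangle\langle a_1| + \alpha_2 |a_2\rangle\langle a_2|$ and $\mathcal{E}_b = \beta_1 |b_1\rangle\langle b_1| + \beta_2 |b_2\rangle\langle b_2|$, with $\{|a_1\rangle,|a_2\rangle\}$ and $\{|b_1\rangle,|b_2\rangle\}$ orthonormal bases of ${\mathbb C}^2$. Bilinearity of $\otimes$ then gives
\begin{equation}
\mathcal{E}_a \otimes \mathcal{E}_b = \sum_{i,j=1}^{2} \alpha_i \beta_j \, |a_i\rangle\langle a_i| \otimes |b_j\rangle\langle b_j|.
\end{equation}
Next I would verify that the four operators $P_{ij} := |a_i\rangle\langle a_i| \otimes |b_j\rangle\langle b_j|$ form a resolution of the identity on ${\mathbb C}^2 \otimes {\mathbb C}^2$. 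Each is self-adjoint and idempotent because it is a tensor product of rank-one orthogonal projectors, and orthogonality $P_{ij} P_{kl} = \delta_{ik}\delta_{jl} P_{ij}$ follows termwise from $\langle a_i|a_k\rangle = \delta_{ik}$ and $\langle b_j|b_l\rangle = \delta_{jl}$. Completeness $\sum_{i,j} P_{ij} = \mathbb{1} \otimes \mathbb{1}$ follows from $\sum_i |a_i\rangle\langle a_i| = \mathbb{1}$ and $\sum_j |b_j\rangle\langle b_j| = \mathbb{1}$ together with distributivity of $\otimes$.

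Consequently $\{P_{11}, P_{12}, P_{21}, P_{22}\}$ is the spectral family of $\mathcal{E}_a \otimes \mathcal{E}_b$, with associated eigenvalues $\alpha_i \beta_j$. Finally I would apply the isomorphism: conjugating both sides of the expansion above by $I^{-1}$ on the left and $I$ on the right gives
\begin{equation}
\mathcal{E} = I^{-1}(\mathcal{E}_a \otimes \mathcal{E}_b) I = \sum_{i,j=1}^{2} \alpha_i \beta_j \, I^{-1} \bigl(|a_i\rangle\langle a_i| \otimes |b_j\rangle\langle b_j|\bigr) I,
\end{equation}
and since $I$ preserves the projector property (as $I^{-1}(\cdot)I$ is an algebra isomorphism) the transported operators $I^{-1} P_{ij} I$ remain mutually orthogonal projectors summing to the identity on ${\mathbb C}^4$. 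This is precisely the claimed spectral family.

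No real obstacle is anticipated: the argument is essentially linear-algebraic bookkeeping. The only point requiring a word of care is that if the numbers $\alpha_i \beta_j$ happen to be degenerate the spectral family of $\mathcal{E}$ in the usual sense would group the corresponding $P_{ij}$ into a single higher-rank projector; the theorem implicitly refers to the finest such resolution, so I would note this briefly to keep the statement unambiguous.
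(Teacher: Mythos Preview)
Your proposal is correct and follows essentially the same route as the paper: write the spectral decompositions of ${\cal E}_a$ and ${\cal E}_b$, expand ${\cal E}_a\otimes{\cal E}_b$ by bilinearity into a sum over the four product projectors, and conjugate by $I^{-1}(\cdot)I$. The paper's proof is terser---it stops at the expanded formula for ${\cal E}$ without explicitly checking that the $P_{ij}$ are mutually orthogonal projectors summing to the identity, and it does not comment on the degeneracy subtlety you raise---so your version is in fact a slightly more careful rendition of the same argument.
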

\begin{proof}
We have ${\cal E}_a=\lambda_1|a_1\rangle \langle a_1|+\lambda_2|a_2\rangle \langle a_2|$, ${\cal E}_b=\mu_1|b_1\rangle \langle b_1|+\mu_2|b_2\rangle \langle b_2|$, and hence
${\cal E}_a \otimes {\cal E}_b=(\lambda_1|a_1\rangle \langle a_1|+\lambda_2|a_2\rangle \langle a_2|) \otimes(\mu_1|b_1\rangle \langle b_1|+\mu_2|b_2\rangle \langle b_2|)=\lambda_1\mu_1 |a_1\rangle \langle a_1| \otimes |b_1\rangle \langle b_1|+\lambda_1\mu_2 |a_1\rangle \langle a_1| \otimes |b_2\rangle \langle b_2|+\lambda_2\mu_1 |a_2\rangle \langle a_2| \otimes |b_1\rangle \langle b_1|+\lambda_2\mu_2|a_2\rangle \langle a_2| \otimes |b_2\rangle \langle b_2|$.
From this follows that
\begin{eqnarray}
{\cal E}&=&\lambda_1\mu_1 I^{-1}|a_1\rangle \langle a_1| \otimes |b_1\rangle \langle b_1|I+\lambda_1\mu_2 I^{-1}|a_1\rangle \langle a_1| \otimes |b_2\rangle \langle b_2|I \nonumber \\
&&+\lambda_2\mu_1 I^{-1}|a_2\rangle \langle a_2| \otimes |b_1\rangle \langle b_1|I+\lambda_2\mu_2I^{-1}|a_2\rangle \langle a_2| \otimes |b_2\rangle \langle b_2|I
\end{eqnarray}
\end{proof}
\begin{theorem} \label{th2}
Let $p$ be a product state represented by the vector $|p\rangle=I^{-1}|p_a\rangle\otimes|p_b\rangle$ with respect to the isomorphism $I$, and $e$ a product measurement represented by the self-adjoint operator ${\cal E}=I{\cal E}_a \otimes {\cal E}_bI^{-1}$ with respect to the same $I$. Let $\{|y_1\rangle, |y_2\rangle, |y_3\rangle, |y_4\rangle\}$ be the ON basis of eigenvectors of ${\cal E}$, and $\{|a_1\rangle, |a_2\rangle\}$ and $\{|b_1\rangle, |b_2\rangle\}$ the ON bases of eigenvectors of ${\cal E}_a$ and ${\cal E}_b$ respectively. Then, we have  $p(A_1)+p(A_2)=p(B_1)+p(B_2)=1$, and  $p(Y_1)=p(A_1)p(B_1)$, $p(Y_2)=p(A_1)p(B_2)$, $p(Y_3)=p(A_2)p(B_1)$ and $p(Y_4)=p(A_2)p(B_2)$, where $\{p(Y_1), p(Y_2), p(Y_3), p(Y_4)\}$ are the probabilities to collapse to states $\{|y_1\rangle, |y_2\rangle, |y_3\rangle, |y_4\rangle\}$, and $\{p(A_1), p(A_2)\}$ and $\{p(B_1), p(B_2)\}$ are the probabilities to collapse to states $\{|a_1\rangle, |a_2\rangle\}$ and $\{|b_1\rangle, |b_2\rangle\}$ respectively.
\end{theorem}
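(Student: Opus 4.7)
The plan is to reduce everything to the standard Born rule in the tensor product $\mathbb{C}^2 \otimes \mathbb{C}^2$, transported back through $I$. Because $I$ is an isomorphism of Hilbert spaces, it preserves inner products, and the eigenvalue discussion already noted in the excerpt tells us that eigenvectors of $\mathcal{E}$ are exactly the $I^{-1}$-preimages of the eigenvectors of $\mathcal{E}_a \otimes \mathcal{E}_b$. The computation then factorises almost mechanically.

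First I would apply Theorem \ref{th1} to pin down the eigenvectors of $\mathcal{E}$. By that theorem, the spectral projectors of $\mathcal{E}$ are of the form $I^{-1}|a_i\rangle\langle a_i|\otimes|b_j\rangle\langle b_j|I$ for $i,j \in \{1,2\}$, so one may take the eigenvectors to be $|y_1\rangle = I^{-1}(|a_1\rangle\otimes|b_1\rangle)$, $|y_2\rangle = I^{-1}(|a_1\rangle\otimes|b_2\rangle)$, $|y_3\rangle = I^{-1}(|a_2\rangle\otimes|b_1\rangle)$, $|y_4\rangle = I^{-1}(|a_2\rangle\otimes|b_2\rangle)$. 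This matches the labelling in the statement.

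Next I would compute $p(Y_k) = |\langle y_k | p \rangle|^2$ using the Born rule. Since $I$ is a Hilbert-space isomorphism, $\langle y_k | p \rangle = \langle I y_k | I p \rangle$, and $I|p\rangle = |p_a\rangle \otimes |p_b\rangle$ by hypothesis. Thus for instance
\begin{equation}
\langle y_1 | p \rangle = \langle a_1 \otimes b_1 \mid p_a \otimes p_b\rangle = \langle a_1 | p_a\rangle \langle b_1 | p_b\rangle,
\end{equation}
and similarly for $k=2,3,4$. Squaring moduli and recognising $p(A_i) = |\langle a_i | p_a\rangle|^2$ and $p(B_j) = |\langle b_j | p_b\rangle|^2$ yields the four factorisation identities $p(Y_1)=p(A_1)p(B_1)$, $p(Y_2)=p(A_1)p(B_2)$, $p(Y_3)=p(A_2)p(B_1)$, $p(Y_4)=p(A_2)p(B_2)$.

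Finally, the normalisation identities $p(A_1)+p(A_2)=p(B_1)+p(B_2)=1$ follow because $\{|a_1\rangle,|a_2\rangle\}$ and $\{|b_1\rangle,|b_2\rangle\}$ are ON bases of $\mathbb{C}^2$ and $|p_a\rangle, |p_b\rangle$ are unit vectors, so $|\langle a_1|p_a\rangle|^2 + |\langle a_2|p_a\rangle|^2 = \|p_a\|^2 = 1$ by Parseval, and likewise for $p_b$. There is no substantive obstacle here: the only point one must be careful about is to invoke the unitarity of $I$ (length- and inner-product-preservation, already observed in the excerpt) when passing the Born-rule inner products from $\mathbb{C}^4$ into $\mathbb{C}^2 \otimes \mathbb{C}^2$, and to use the correct form $\mathcal{E}=I^{-1}(\mathcal{E}_a\otimes\mathcal{E}_b)I$ consistent with Theorem \ref{th1}.
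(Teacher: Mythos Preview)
Your proposal is correct and follows essentially the same approach as the paper: both invoke Theorem~\ref{th1} to identify the spectral projectors of $\mathcal{E}$ with $I^{-1}(|a_i\rangle\langle a_i|\otimes|b_j\rangle\langle b_j|)I$, transport the Born-rule computation through $I$ into $\mathbb{C}^2\otimes\mathbb{C}^2$, and factorise the resulting tensor-product inner product. Your version is slightly more explicit in treating the normalisation identities $p(A_1)+p(A_2)=p(B_1)+p(B_2)=1$ via Parseval, and you correctly flag the need to use the form $\mathcal{E}=I^{-1}(\mathcal{E}_a\otimes\mathcal{E}_b)I$ consistent with Theorem~\ref{th1}.
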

\begin{proof}
Let us calculate, e.g., $p(Y_1)$. From Th. \ref{th1} follows that $p(Y_1)=\langle p| y_1\rangle\langle y_1|p\rangle$=$(\langle p_a|\otimes \langle p_b|I)I^{-1}|a_1\rangle\langle a_1| \otimes |b_1\rangle \langle b_1|I(I^{-1}|p_a\rangle\otimes|p_b\rangle)$=$(\langle p_a|\otimes\langle p_b|)|a_1\rangle\langle a_1| \otimes |b_1\rangle \langle b_1|(|p_a\rangle\otimes|p_b\rangle)=\langle p_a|a_1\rangle \langle a_1|p_a\rangle \langle p_b|b_1\rangle \langle b_1|p_b\rangle$=$p(A_1)p(B_1)$.
Analogously, we can prove the product rule for the remaining probabilities.
\end{proof}
With this theorem we prove that if there exists an isomorphism $I$ between ${\mathbb C}^4$ and ${\mathbb C}^2 \otimes {\mathbb C}^2$ such that state and measurement are both product with respect to this isomorphism, then the probabilities factorize.
A consequence is that in case the probabilities do not factorize, which is the case for the probabilities that we have measured in our experiment as exposed in Sect. \ref{experiment} -- indeed, for example, $p(Y_1)=0.0494\ne 0.2556=p(A_1)p(B_2)$ -- the theorem is not satisfied. This means that there does not exist an isomorphism between ${\mathbb C}^4$ and ${\mathbb C}^2 \otimes {\mathbb C}^2$ such that both state and measurement are product with respect to this isomorphism, and there is genuine entanglement. The above theorem however does not yet prove where this entanglement is located, and how it is structured. The next theorems tell us more about this.

We consider now the coincidence measurements $AB$, $AB'$, $A'B$ and $A'B'$ from a typical Bell-type experimental setting. For each measurement we consider the ON bases of its eigenvectors in ${\mathbb C}^4$. For the measurement $AB$ this gives rise to the unit vectors $\{|ab_{11}\rangle, |ab_{12}\rangle, |ab_{21}\rangle, |ab_{22}\rangle\}$, for $AB'$ to the vectors $\{|ab'_{11}\rangle, |ab'_{12}\rangle, |ab'_{21}\rangle, |ab'_{22}\rangle\}$, for $A'B$ to the unit vectors $\{|a'b_{11}\rangle, |a'b_{12}\rangle, |a'b_{21}\rangle, |a'b_{22}\rangle\}$ and for $A'B'$ to the vectors $\{|a'b'_{11}\rangle,|a'b'_{12}\rangle,$ $  |a'b'_{21}\rangle, |a'b'_{22}\rangle\}$. We introduce the dynamical evolutions $u_{AB'AB}$, \ldots, represented by the unitary operators ${\cal U}_{AB'AB}$,\ldots, connecting the different coincidence experiments for any combination of them  
\begin{equation}
{\cal U}_{AB'AB}:{\mathbb C}^4 \rightarrow {\mathbb C}^4 \ |ab_{11}\rangle \mapsto |ab'_{11}\rangle, |ab_{12}\rangle \mapsto |ab'_{12}\rangle, |ab_{21}\rangle \mapsto |ab'_{21}\rangle, |ab_{22}\rangle \mapsto |ab'_{22}\rangle 
\end{equation}
\begin{theorem}\label{measuremententanglement}
There exists a isomorphism between ${\mathbb C}^4$ and ${\mathbb C}^2 \otimes {\mathbb C}^2$ with respect to which both measurements $AB$ and $AB'$ are product measurements iff there exists an isomorphism between ${\mathbb C}^4$ and ${\mathbb C}^2 \otimes {\mathbb C}^2$ with respect to which the dynamical evolution $u_{AB'AB}$ is a product evolution and one of the measurements is a product measurement. In this case the marginal law is satisfied for the probabilities connected to these measurements, i.e. $p(A_1,B_1)+p(A_1,B_2)=p(A_1,B'_1)+p(A_1,B'_2)$.
\end{theorem}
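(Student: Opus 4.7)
The plan is to combine Theorem~\ref{th1}, which characterises a product measurement by the product form of its eigenbasis under $I$, with the elementary observation that a unitary sending one product ON basis to another while preserving the index labelling is itself a tensor product of two $2\times 2$ unitaries. The two implications will follow by transporting the product structure between $AB$ and $AB'$ along ${\cal U}_{AB'AB}$, and the marginal law will drop out of the shared first-factor spectral projector.

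For the forward direction, assume $AB$ and $AB'$ are both product with respect to the same $I$. By Theorem~\ref{th1} I may write $I|ab_{ij}\rangle=|a_i\rangle\otimes|b_j\rangle$ and $I|ab'_{ij}\rangle=|a_i\rangle\otimes|b'_j\rangle$ with the \emph{same} first-factor basis $\{|a_1\rangle,|a_2\rangle\}$, since the index $i$ labels the same $A$-outcome in both coincidence experiments and so the first-factor spectral projector is common. Under $I$ the defining action $|ab_{ij}\rangle\mapsto|ab'_{ij}\rangle$ becomes $|a_i\rangle\otimes|b_j\rangle\mapsto|a_i\rangle\otimes|b'_j\rangle$, which agrees on a basis with $\mathbb{1}\otimes U_b$ where $U_b|b_j\rangle=|b'_j\rangle$; hence $I\,{\cal U}_{AB'AB}\,I^{-1}=\mathbb{1}\otimes U_b$ is a product evolution. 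For the reverse direction, assume $I\,{\cal U}_{AB'AB}\,I^{-1}=U_a\otimes U_b$ and, say, $I\,{\cal E}_{AB}\,I^{-1}={\cal E}_a\otimes{\cal E}_b$. Because the outcome labelling of $AB'$ is inherited from that of $AB$ via ${\cal U}_{AB'AB}$, one has ${\cal E}_{AB'}={\cal U}_{AB'AB}\,{\cal E}_{AB}\,{\cal U}_{AB'AB}^{-1}$, so conjugation gives
\[
I\,{\cal E}_{AB'}\,I^{-1}=(U_a\otimes U_b)({\cal E}_a\otimes{\cal E}_b)(U_a\otimes U_b)^{-1}=(U_a{\cal E}_aU_a^{-1})\otimes(U_b{\cal E}_b U_b^{-1}),
\]
a manifestly product operator, so $AB'$ is a product measurement with respect to $I$; the case in which $AB'$ rather than $AB$ is assumed product is symmetric.

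For the marginal law, once both measurements are product under the same $I$ with shared first-factor projector $|a_1\rangle\langle a_1|$, a computation in the spirit of Theorem~\ref{th2} yields
\[
p(A_1,B_1)+p(A_1,B_2)=\sum_{j}\bigl|(\langle a_1|\otimes\langle b_j|)\,I|p\rangle\bigr|^2=\langle Ip|\bigl(|a_1\rangle\langle a_1|\otimes\mathbb{1}\bigr)|Ip\rangle,
\]
and the same right-hand side arises for $\{|b'_1\rangle,|b'_2\rangle\}$ by their completeness on $\mathbb{C}^2$, so $p(A_1,B_1)+p(A_1,B_2)=p(A_1,B'_1)+p(A_1,B'_2)$. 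The main obstacle running through the argument is the identification step: that the two product decompositions of $AB$ and $AB'$ can be written with a common first-factor eigenbasis. This is not a purely algebraic consequence of each measurement being product under $I$ individually; it is the Hilbert-space expression of the physical fact that the $A$-outcomes denote the same events in both coincidence experiments. Once that identification is granted, the rest is routine tensor-product algebra.
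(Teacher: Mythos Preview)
Your proof is correct and follows essentially the same route as the paper: characterise product measurements via the product form of their eigenbases under $I$, transport the product structure along ${\cal U}_{AB'AB}$ for the two implications, and obtain the marginal law from the shared first-factor projector $|a_1\rangle\langle a_1|$ together with the completeness relations $\sum_j|b_j\rangle\langle b_j|=\sum_j|b'_j\rangle\langle b'_j|=\mathbbm{1}$. The one notable difference is that you are explicit about the identification step---that both product decompositions share the same first-factor basis because the $A$-outcome labels denote the same physical events---whereas the paper simply writes ${\cal E}_A$ for both first factors without comment; your candour about this point is a virtue, not a gap.
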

\begin{proof}
Suppose that $AB$ and $AB'$ are product measurements with respect to $I$. This means that ${\cal E}_{AB}=I^{-1}{\cal E}_A\otimes{\cal E}_BI$ and ${\cal E}_{AB'}=I^{-1}{\cal E}_A\otimes{\cal E}_B'I$. Let us define
\begin{eqnarray}
{\cal U}_{BB'}: {\mathbb C}^2\rightarrow {\mathbb C}^2 \quad |b_1\rangle \mapsto |b'_1\rangle \quad |b_2\rangle \mapsto |b'_2\rangle
\end{eqnarray}
where $\{b_1, b_2\}$ and $\{b'_1,b'_2\}$ are the ON bases of eigenvectors of ${\cal E}_B$ and ${\cal E}_{B'}$ respectively.
We have 
\begin{eqnarray}
I^{-1}{\mathbb I} \otimes {\cal U}_{BB'}I|ab_{11}\rangle=I^{-1}{\mathbb I} \otimes {\cal U}_{BB'}(|a_1\rangle \otimes |b_1\rangle)=I^{-1}(|a_1\rangle \otimes |b'_1\rangle)=|ab'_{11}\rangle \\
I^{-1}{\mathbb I} \otimes {\cal U}_{BB'}I|ab_{12}\rangle=I^{-1}{\mathbb I} \otimes {\cal U}_{BB'}(|a_1\rangle \otimes |b_2\rangle)=I^{-1}(|a_1\rangle \otimes |b'_2\rangle)=|ab'_{12}\rangle \\
I^{-1}{\mathbb I} \otimes {\cal U}_{BB'}I|ab_{21}\rangle=I^{-1}{\mathbb I} \otimes {\cal U}_{BB'}(|a_2\rangle \otimes |b_1\rangle)=I^{-1}(|a_2\rangle \otimes |b'_1\rangle)=|ab'_{21}\rangle \\
I^{-1}{\mathbb I} \otimes {\cal U}_{BB'}I|ab_{22}\rangle=I^{-1}{\mathbb I} \otimes {\cal U}_{BB'}(|a_2\rangle \otimes |b_2\rangle)=I^{-1}(|a_2\rangle \otimes |b'_2\rangle)=|ab'_{22}\rangle
\end{eqnarray}
which proves that $I^{-1}{\mathbb I} \otimes {\cal U}_{BB'}I={\cal U}_{AB'AB}$. Hence $u_{AB'AB}$ is a product evolution.
Suppose now that $u_{AB'AB}$ is a product evolution with respect to $I$, and $AB$ is a product measurement, let us prove that $AB'$ is a product measurement. We have ${\cal E}_{AB}=I^{-1}{\cal E}_A\otimes{\cal E}_BI$ and ${\cal U}_{AB'AB}=I^{-1}{\mathbb I} \otimes {\cal U}_{BB'}I$. Consider
\begin{eqnarray}
|ab'_{11}\rangle \langle a b'_{11}|&=&{\cal U}_{AB'AB}|ab_{11}\rangle \langle ab_{11}|{\cal U}_{AB'AB}^{-1}={\cal U}_{AB'AB}I^{-1}|a_1\rangle \langle a_1| \otimes |b_1\rangle \langle b_1|I{\cal U}_{AB'AB}^{-1} \nonumber \\
&=&(I^{-1}{\mathbb I} \otimes {\cal U}_{BB'}I)I^{-1}|a_1\rangle \langle a_1| \otimes |b_1\rangle \langle b_1|I(I^{-1}{\mathbb I} \otimes {\cal U}_{BB'}I)^{-1} \nonumber \\
&=&I^{-1}{\mathbb I} \otimes {\cal U}_{BB'}(|a_1\rangle \langle a_1| \otimes |b_1\rangle \langle b_1|)I(I^{-1}({\mathbb I} \otimes {\cal U}_{BB'})^{-1}I) \nonumber \\
&=&I^{-1}{\mathbb I} \otimes {\cal U}_{BB'}(|a_1\rangle \langle a_1| \otimes |b_1\rangle \langle b_1|)({\mathbb I} \otimes {\cal U}_{BB'})^{-1}I) \nonumber \\
&=&I^{-1}(|a_1\rangle \langle a_1| \otimes |b'_1\rangle \langle b'_1|)I
\end{eqnarray}
In an analogous way we prove that 
\begin{eqnarray}
|ab'_{12}\rangle \langle a b'_{12}|=I^{-1}(|a_1\rangle \langle a_1| \otimes |b'_2\rangle \langle b'_2|)I \\
|ab'_{21}\rangle \langle a b'_{21}|=I^{-1}(|a_2\rangle \langle a_2| \otimes |b'_1\rangle \langle b'_1|)I \\
|ab'_{22}\rangle \langle a b'_{22}|=I^{-1}(|a_2\rangle \langle a_2| \otimes |b'_2\rangle \langle b'_2|)I
\end{eqnarray}
This proves that $AB'$ is a product measurement.

Let us prove now that the marginal law is satisfied for the probabilities connected to two product measurements. We have
\begin{eqnarray}
p(A_1,B_1)+p(A_1,B_2)&=&\langle p |ab_{11}\rangle\langle ab_{11}|p\rangle+ \langle p |ab_{12}\rangle\langle ab_{12}|p\rangle \nonumber\\
&=&\langle p| (I^{-1}|a_1\rangle\langle a_1| \otimes|b_1\rangle\langle b_1|I) |p \rangle+ \langle p|(I^{-1}|a_1\rangle\langle a_1|\otimes |b_2\rangle\langle b_2|I)|p\rangle \nonumber\\
&=&\langle p| (I^{-1}|a_1\rangle\langle a_1|) \otimes (|b_1\rangle\langle b_1|+|b_2\rangle\langle b_2)I |p\rangle\nonumber \\
&=&\langle p| (I^{-1}|a_1\rangle\langle a_1|) \otimes (|b'_1\rangle\langle b'_1|+|b'_2\rangle\langle b'_2|)I |p\rangle \nonumber\\
&=&\langle p| (I^{-1}|a_1\rangle\langle a_1| \otimes|b'_1\rangle\langle b'_1|I) |p \rangle+ \langle p|(I^{-1}|a_1\rangle\langle a_1|\otimes |b'_2\rangle\langle b'_2|I)|p\rangle\nonumber \\ 
&=&\langle p |ab'_{11}\rangle\langle ab'_{11}|p\rangle+ \langle p |ab'_{12}\rangle\langle ab'_{12}|p\rangle \nonumber \\
&=&p(A_1,B'_1)+p(A_1,B'_2)
\end{eqnarray}
\end{proof}
The above theorem introduces an essential deviation of the customary entanglement scheme, which we felt forced to consider as a consequence of our `animal acts' experimental data. Indeed, considering Table \ref{tab}, we have $P(A_1, B_1)+p(A_1,B_2)=0.679\ne0.618=p(A_1,B'_1)+p(A_1,B'_2)$, which shows that the marginal law is not satisfied for our data. Remark that $p(A'_1,B_1)+p(A'_1,B_2)=0.864\ne0.234=p(A'_1,B'_1)+p(A'_1,B'_2)$ showing that for this case the deviation from the marginal law of our data is very strong, and cannot be considered to be due to experimental error.

Hence, for our data there does not exist an isomorphism between 
${\mathbb C}^4$ and ${\mathbb C}^2\otimes{\mathbb C}^2$, such that with respect to this isomorphism all measurements that we performed in our experiment can be considered to be product measurements. It right away shows that we will not able to model our data within the customary entanglement scheme used for the coupled spins in ${\mathbb C}^2\otimes{\mathbb C}^2$. We could have expected this, since indeed, in this customary scheme all considered measurements are product measurements, and entanglement only appears in the state of the compound entity. Does this mean that amongst our measurements performed in our experiment at least one will appear as an entangled measurement in any modeling we can propose? Indeed, no escape is possible for this conclusion, at least if we want to define entanglement with respect to one fixed isomorphism between ${\mathbb C}^4$ and ${\mathbb C}^2\otimes{\mathbb C}^2$.

Before we continue our analysis we want to reflect somewhat about the state of affairs we here identify. Entanglement being present at the level of the measurements seems to be far more drastic than entanglement being present at the level of the state. Indeed, like we mentioned already, entanglement at the level of the state is customary interpreted as due to not being able any longer to consider the compound entity as consisting of two well defined sub entities, and needing to consider it as a new undivided whole. Hence, entanglement at the level of the measurements, does this mean then that not only the entity is a new undivided whole, but also the measurements are connected in an undivided way? Of course, it is true, our measurements have been performed by subjects, using their undivided mind to choose, given the state of the combined concept {\it The Animal Acts}, and perhaps our finding is an expression of this? On the other hand, we are confronted with the following. Even if in our experiment the undivided mind of the subject has made the choice on the combination {\it The Animal Acts}, actually the choice is made between four well defined combinations of states, and each state of the combination is a state of {\it Animal} or of {\it Acts} separately. The situation becomes even more intricate if we note that an entangled measurement cannot give rise to collapsed states that are product states. Does this then mean that some of the collapsed states, such as {\it The Horse Whinnies}, or {\it The Cat Growls} etc\ldots, are `not' product states, but entangled states? Definitely there is still an aspect of our new entanglement scheme that has not been understood completely, and the next theorem is aimed at doing so. 

\begin{theorem}\label{unitaryentanglement}
For each coincidence measurement $AB$, there exists a specific isomorphism 
\begin{eqnarray}
&&I_{AB}: {\mathbb C}^4 \rightarrow {\mathbb C}^2 \otimes {\mathbb C}^2 \\
&&I_{AB}|ab_{11}\rangle=|a_1\rangle\otimes|b_1\rangle \quad I_{AB}|ab_{12}\rangle=|a_1\rangle\otimes |b_2\rangle \label{isomorphismI_AB} \\ 
&&I_{AB}|ab_{21}\rangle=|a_2\rangle\otimes|b_1\rangle \quad I_{AB}|ab_{22}\rangle=|a_2\rangle\otimes|b_2\rangle \label{isomorphismI_ABbis}
\end{eqnarray}
for which all entanglement related to this state-measurement situation is concentrated in the state $p$ of the considered compound entity in the form of a typical entangled state $I_{AB}|p\rangle$, where $|p\rangle$ is the vector representing $p$, which we call the $AB$-entanglement representation of $p$. The measurement $AB$ is a product measurements with respect to $I_{AB}$. Two such product measurements $AB$, with respect to $I_{AB}$, and $AB'$, with respect to $I_{AB'}$, are connected by product unitary transformations $I_{AB'}{\cal U}_{AB'AB}I^{-1}_{AB}$, such that for the measurements and the dynamical evolution connections in ${\mathbb C}^2 \otimes {\mathbb C}^2$ all is reduced to the customary entanglement scheme. However, two different entanglement representations of the state $|p\rangle$, e.g. $I_{AB}|p\rangle$ and $I_{AB'}|p\rangle$, are not representing the same state in ${\mathbb C}^2 \otimes {\mathbb C}^2$, i.e. are not vectors that differ only a phase factor, if the marginal law with respect to the two considered coincidence measurements $AB$ and $AB'$ is not satisfied.
\end{theorem}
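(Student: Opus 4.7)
The plan is to verify three claims in sequence: first, that the map $I_{AB}$ defined by Eqs.~(\ref{isomorphismI_AB})--(\ref{isomorphismI_ABbis}) is an isomorphism under which $AB$ factorizes as a product measurement; second, that $I_{AB'}{\cal U}_{AB'AB}I_{AB}^{-1}$ is a product unitary on ${\mathbb C}^2\otimes{\mathbb C}^2$; and third, the decisive point, that $I_{AB}|p\rangle$ and $I_{AB'}|p\rangle$ cannot coincide up to a phase whenever the marginal law is violated.

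For the first claim, $I_{AB}$ sends one ON basis to another and therefore extends uniquely to a unitary map. The eigenvalues $\lambda_{ij}\in\{+1,-1\}$ of ${\cal E}_{AB}$ factorize as $\alpha_i\beta_j$ with $\alpha_1=\beta_1=+1$ and $\alpha_2=\beta_2=-1$, so reading Theorem~\ref{th1} in the reverse direction yields $I_{AB}{\cal E}_{AB}I_{AB}^{-1}=\sum_{ij}\alpha_i\beta_j\,|a_i\rangle\langle a_i|\otimes|b_j\rangle\langle b_j|={\cal E}_A\otimes{\cal E}_B$. Hence $AB$ is a product measurement with respect to $I_{AB}$, and any entanglement attached to the pair $(p,AB)$ is entirely transferred into the single vector $I_{AB}|p\rangle$, which is what justifies calling it the $AB$-entanglement representation of $p$.

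For the second claim, I will evaluate $I_{AB'}{\cal U}_{AB'AB}I_{AB}^{-1}$ directly on the product basis: $I_{AB}^{-1}(|a_i\rangle\otimes|b_j\rangle)=|ab_{ij}\rangle$, then ${\cal U}_{AB'AB}|ab_{ij}\rangle=|ab'_{ij}\rangle$, and finally $I_{AB'}|ab'_{ij}\rangle=|a_i\rangle\otimes|b'_j\rangle$. Defining the two-dimensional unitary ${\cal U}_{BB'}:|b_j\rangle\mapsto|b'_j\rangle$ on ${\mathbb C}^2$, one immediately obtains $I_{AB'}{\cal U}_{AB'AB}I_{AB}^{-1}={\mathbb I}\otimes{\cal U}_{BB'}$, a manifest product evolution. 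Combined with the first claim this shows that, once the state has been transported via $I_{AB}$ or $I_{AB'}$, the measurement-and-evolution structure in ${\mathbb C}^2\otimes{\mathbb C}^2$ is exactly the customary coupled-spin entangled-state scheme.

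For the third claim, I compute the one-sided marginal $\sum_j p(A_1,B_j)$ inside the $I_{AB}$-representation and $\sum_j p(A_1,B'_j)$ inside the $I_{AB'}$-representation. Because the two spectral projectors of each product measurement sum to $|a_1\rangle\langle a_1|\otimes{\mathbb I}$, these marginals equal $\langle I_{AB}p|(|a_1\rangle\langle a_1|\otimes{\mathbb I})|I_{AB}p\rangle$ and $\langle I_{AB'}p|(|a_1\rangle\langle a_1|\otimes{\mathbb I})|I_{AB'}p\rangle$ respectively. If $I_{AB'}|p\rangle=e^{i\phi}I_{AB}|p\rangle$ these two expectations must coincide; contrapositively, a violation of the marginal law forces the two tensor-product representations of $|p\rangle$ to differ by more than a global phase. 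The main obstacle here is conceptual rather than algebraic: one must be comfortable with using a different isomorphism for each coincidence measurement, and it is precisely this last step that prevents the construction from trivializing entanglement by a clever change of basis. Once the common basis $\{|a_1\rangle,|a_2\rangle\}$ is fixed so as to serve both $I_{AB}$ and $I_{AB'}$, everything else is direct algebra in ${\mathbb C}^2\otimes{\mathbb C}^2$.
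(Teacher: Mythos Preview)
Your proof is correct and, for the first two claims, follows essentially the same route as the paper: the factorization of the eigenvalues $\lambda_{ij}=\mu_i\nu_j$ to show $I_{AB}{\cal E}_{AB}I_{AB}^{-1}={\cal E}_A\otimes{\cal E}_B$, and the identification of $I_{AB'}{\cal U}_{AB'AB}I_{AB}^{-1}$ with ${\mathbb I}\otimes{\cal U}_{BB'}$. Your verification of the second claim by direct action on basis vectors is in fact cleaner than the paper's version, which only checks that the composite unitary conjugates ${\cal E}_A\otimes{\cal E}_B$ into ${\cal E}_A\otimes{\cal E}_{B'}$ and then asserts the product form.

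Where you genuinely diverge is in the third claim. The paper proceeds by brute force: it expands $I_{AB}|p\rangle$ and $I_{AB'}|p\rangle$ in the basis $|a_i\rangle\otimes|b_j\rangle$, writes out the four scalar equations implied by $I_{AB'}|p\rangle=e^{i\alpha}I_{AB}|p\rangle$, and then pushes through a page-long chain of inner products, repeatedly inserting the resolution of the identity $|b_1\rangle\langle b_1|+|b_2\rangle\langle b_2|=|b'_1\rangle\langle b'_1|+|b'_2\rangle\langle b'_2|$, to arrive at $p(A_1,B_1)+p(A_1,B_2)=p(A_1,B'_1)+p(A_1,B'_2)$. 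You bypass this entirely by observing that both marginals are expectations of the \emph{same} fixed operator $|a_1\rangle\langle a_1|\otimes{\mathbb I}$ in the two states $I_{AB}|p\rangle$ and $I_{AB'}|p\rangle$, so equality up to a global phase immediately forces the marginal law. This is a substantially shorter and more conceptual argument; what makes it work is your explicit remark that the two isomorphisms share the first-factor basis $\{|a_1\rangle,|a_2\rangle\}$, which is exactly what allows the comparison operator to be common to both representations. The paper's calculation hides this point inside the algebra; your version exposes it as the structural reason the contrapositive holds.
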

\begin{proof}
The self-adjoint operator ${\mathcal E}_{AB}$ modeling $AB$ can always be written as follows 
${\mathcal E}_{AB}=\lambda_{11}|ab_{11}\rangle \langle ab_{11}|+\lambda_{12}|ab_{12}\rangle \langle ab_{12}|+\lambda_{21}|ab_{21}\rangle \langle ab_{21}|+\lambda_{22}|ab_{22}\rangle \langle ab_{22}|$, 
with $\lambda_{11}, \lambda_{12}, \lambda_{21}$ and $\lambda_{22}$ real numbers chosen such that for real numbers $\mu_1, \mu_2$ we have $\nu_1, \nu_2$, and $\lambda_{11}=\mu_1\nu_1$, $\lambda_{12}=\mu_1\nu_2$, $\lambda_{21}=\mu_2\nu_1$ and $\lambda_{22}=\mu_2\nu_2$. Hence
\begin{eqnarray}
I_{AB}{\mathcal E}_{AB}I_{AB}^{-1}&=&\mu_1\nu_1 |a_1\rangle \langle a_1|\otimes |b_1\rangle\langle b_1|+\mu_1\nu_2|a_1\rangle \langle a_1|\otimes |b_2\rangle\langle b_2| \nonumber \\
&&+\mu_2\nu_1|a_2\rangle \langle a_2|\otimes |b_1\rangle\langle b_1|+\mu_2\nu_2|a_2\rangle \langle a_2|\otimes |b_2\rangle\langle b_2| \nonumber \\
&=&(\mu_1|a_1\rangle \langle a_1|+\mu_2|a_2\rangle \langle a_2|)\otimes(\nu_1|b_1\rangle \langle b_1|+\nu_2|b_2\rangle \langle b_2|) \nonumber \\
&=&{\mathcal E}_A\otimes{\mathcal E}_B
\end{eqnarray}
which proves that $AB$ is a product measurement. We have
\begin{eqnarray}
I_{AB'}{\cal U}_{AB'AB}I^{-1}_{AB}{\mathcal E}_A\otimes{\mathcal E}_B(I_{AB'}{\cal U}_{AB'AB}I^{-1}_{AB})^{-1}&=&I_{AB'}{\cal U}_{AB'AB}I^{-1}_{AB}{\mathcal E}_A \otimes{\mathcal E}_BI_{AB}{\cal U}_{AB'AB}^{-1}I^{-1}_{AB'} \nonumber \\
&=&I_{AB'}{\cal U}_{AB'AB}{\cal E}_{AB}{\cal U}_{AB'AB}^{-1}I^{-1}_{AB'}=I_{AB'}{\cal E}_{AB'}I^{-1}_{AB'} \nonumber \\
&=&{\cal E}_A\otimes{\cal E}_{B'}
\end{eqnarray}
which proves that $I_{AB'}{\cal U}_{AB'AB}I^{-1}_{AB}$ works as a product unitary transformation ${\mathbb I}\otimes{\cal U}_{B'B}$, transforming ${\mathcal E}_A\otimes{\mathcal E}_B$ into ${\mathcal E}_A\otimes{\mathcal E}_B'$. 
Let us look now at the vectors $I_{AB}|p\rangle$ and $I_{AB'}|p\rangle$. We have 
\begin{eqnarray}
I_{AB}|p\rangle&=&I_{AB}|ab_{11}\rangle \langle ab_{11}|p\rangle +I_{AB}|ab_{12}\rangle \langle ab_{12}|p\rangle+ I_{AB}|ab_{21}\rangle \langle ab_{21}|p\rangle+I_{AB}|ab_{22}\rangle \langle ab_{22}|p\rangle \nonumber \\
&=&|a_1\rangle \otimes |b_1\rangle \langle ab_{11}|p\rangle +|a_1\rangle \otimes |b_2\rangle \langle ab_{12}|p\rangle+ |a_2\rangle \otimes |b_1\rangle \langle ab_{21}|p\rangle+|a_2\rangle \otimes |b_2\rangle \langle ab_{22}|p\rangle \\
I_{AB'}|p\rangle&=&I_{AB'}|ab'_{11}\rangle \langle ab'_{11}|p\rangle +I_{AB'}|ab'_{12}\rangle \langle ab'_{12}|p\rangle+ I_{AB'}|ab'_{21}\rangle \langle ab'_{21}|p\rangle+I_{AB'}|ab'_{22}\rangle \langle ab'_{22}|p\rangle \nonumber \\
&=&|a_1\rangle \otimes |b'_1\rangle \langle ab'_{11}|p\rangle +|a_1\rangle \otimes |b'_2\rangle \langle ab'_{12}|p\rangle+ |a_2\rangle \otimes |b'_1\rangle \langle ab'_{21}|p\rangle+|a_2\rangle \otimes |b'_2\rangle \langle ab'_{22}|p\rangle \nonumber  \\
&=&|a_1\rangle \otimes (|b_1\rangle \langle b_1|b'_1\rangle+|b_2\rangle \langle b_2|b'_1\rangle) \langle ab'_{11}|p\rangle +|a_1\rangle \otimes (|b_1\rangle \langle b_1|b'_2\rangle+|b_2\rangle \langle b_2|b'_2\rangle) \langle ab'_{12}|p\rangle \nonumber \\
&&+ |a_2\rangle \otimes (|b_1\rangle \langle b_1|b'_1\rangle+|b_2\rangle \langle b_2|b'_1\rangle) \langle ab'_{21}|p\rangle+|a_2\rangle \otimes (|b_1\rangle \langle b_1|b'_2\rangle+|b_2\rangle \langle b_2|b'_2\rangle) \langle ab'_{22}|p\rangle \nonumber \\
&=&|a_1\rangle \otimes |b_1\rangle (\langle b_1|b'_1\rangle\langle ab'_{11}|p\rangle+\langle b_1|b'_2\rangle\langle ab'_{12}|p\rangle)+|a_1\rangle \otimes |b_2\rangle (\langle b_2|b'_1\rangle\langle ab'_{11}|p\rangle+\langle b_2|b'_2\rangle\langle ab'_{12}|p\rangle) \nonumber \\
&&+|a_2\rangle \otimes |b_1\rangle (\langle b_1|b'_1\rangle\langle ab'_{21}|p\rangle+\langle b_1|b'_2\rangle\langle ab'_{22}|p\rangle)+|a_2\rangle \otimes |b_2\rangle (\langle b_2|b'_1\rangle\langle ab'_{21}|p\rangle+\langle b_2|b'_2\rangle\langle ab'_{22}|p\rangle)
\end{eqnarray}
For $I_{AB}|p\rangle$ to be equal to $I_{AB'}|p\rangle$ except for a phase factor $e^{i\alpha}$, we need the following equalities to hold
\begin{eqnarray}
\langle b_1|b'_1\rangle\langle ab'_{11}|p\rangle+\langle b_1|b'_2\rangle\langle ab'_{12}|p\rangle=\langle ab_{11}|p\rangle e^{i\alpha} \\
\langle b_2|b'_1\rangle\langle ab'_{11}|p\rangle+\langle b_2|b'_2\rangle\langle ab'_{12}|p\rangle=\langle ab_{12}|p\rangle e^{i\alpha} \\
\langle b_1|b'_1\rangle\langle ab'_{21}|p\rangle+\langle b_1|b'_2\rangle\langle ab'_{22}|p\rangle=\langle ab_{21}|p\rangle e^{i\alpha} \\
\langle b_2|b'_1\rangle\langle ab'_{21}|p\rangle+\langle b_2|b'_2\rangle\langle ab'_{22}|p\rangle=\langle ab_{22}|p\rangle e^{i\alpha}
\end{eqnarray}
This means that
\begin{eqnarray}
p(A_1,B_1)+p(A_1,B_2)&=&\langle p|ab_{11}\rangle \langle ab_{11}|p\rangle+\langle p|ab_{12}\rangle \langle ab_{12}|p\rangle \nonumber \\ 
&=&(\langle p|ab'_{11}\rangle\langle b'_1|b_1\rangle+\langle p|ab'_{12}\rangle\langle b'_2|b_1\rangle)(\langle b_1|b'_1\rangle\langle ab'_{11}|p\rangle+\langle b_1|b'_2\rangle\langle ab'_{12}|p\rangle) \nonumber \\
&&+(\langle p|ab'_{11}\rangle\langle b'_1|b_2\rangle+\langle p|ab'_{12}\rangle\langle b'_2|b_2\rangle)(\langle b_2|b'_1\rangle\langle ab'_{11}|p\rangle+\langle b_2|b'_2\rangle\langle ab'_{12}|p\rangle) \nonumber \\
&=&\langle p|ab'_{11}\rangle\langle b'_1|b_1\rangle\langle b_1|b'_1\rangle\langle ab'_{11}|p\rangle+\langle p|ab'_{11}\rangle\langle b'_1|b_1\rangle\langle b_1|b'_2\rangle\langle ab'_{12}|p\rangle \nonumber \\
&&+\langle p|ab'_{12}\rangle\langle b'_2|b_1\rangle\langle b_1|b'_1\rangle\langle ab'_{11}|p\rangle+\langle p|ab'_{12}\rangle\langle b'_2|b_1\rangle\langle b_1|b'_2\rangle\langle ab'_{12}|p\rangle \nonumber \\
&&+\langle p|ab'_{11}\rangle\langle b'_1|b_2\rangle\langle b_2|b'_1\rangle\langle ab'_{11}|p\rangle+\langle p|ab'_{11}\rangle\langle b'_1|b_2\rangle\langle b_2|b'_2\rangle\langle ab'_{12}|p\rangle \nonumber \\
&&+\langle p|ab'_{12}\rangle\langle b'_2|b_2\rangle\langle b_2|b'_1\rangle\langle ab'_{11}|p\rangle+\langle p|ab'_{12}\rangle\langle b'_2|b_2\rangle\langle b_2|b'_2\rangle\langle ab'_{12}|p\rangle \nonumber \\
&=&\langle p|ab'_{11}\rangle\langle b'_1|(|b_1\rangle\langle b_1|+|b_2\rangle\langle b_2|)|b'_1\rangle\langle ab'_{11}|p\rangle \nonumber \\
&&+\langle p|ab'_{11}\rangle\langle b'_1|(|b_1\rangle\langle b_1|+|b_2\rangle\langle b_2|)|b'_2\rangle\langle ab'_{12}|p\rangle \nonumber \\
&&+\langle p|ab'_{12}\rangle\langle b'_2|(|b_1\rangle\langle b_1|+|b_2\rangle\langle b_2|)|b'_1\rangle\langle ab'_{11}|p\rangle \nonumber \\
&&+\langle p|ab'_{12}\rangle\langle b'_2|(|b_1\rangle\langle b_1|+|b_2\rangle\langle b_2|)|b'_2\rangle\langle ab'_{12}|p\rangle \nonumber \\
&=&\langle p|ab'_{11}\rangle\langle b'_1|(|b'_1\rangle\langle b'_1|+|b'_2\rangle\langle b'_2|)|b'_1\rangle\langle ab'_{11}|p\rangle \nonumber \\
&&+\langle p|ab'_{11}\rangle\langle b'_1|(|b'_1\rangle\langle b'_1|+|b'_2\rangle\langle b'_2|)|b'_2\rangle\langle ab'_{12}|p\rangle \nonumber \\
&&+\langle p|ab'_{12}\rangle\langle b'_2|(|b'_1\rangle\langle b'_1|+|b'_2\rangle\langle b'_2|)|b'_1\rangle\langle ab'_{11}|p\rangle \nonumber \\
&&+\langle p|ab'_{12}\rangle\langle b'_2|(|b'_1\rangle\langle b'_1|+|b'_2\rangle\langle b'_2|)|b'_2\rangle\langle ab'_{12}|p\rangle \nonumber \\
&=&\langle p|ab'_{11}\rangle\langle b'_1|b'_1\rangle\langle b'_1|b'_1\rangle\langle ab'_{11}|p\rangle+\langle p|ab'_{12}\rangle\langle b'_2|b'_2\rangle\langle b'_2|b'_2\rangle\langle ab'_{12}|p\rangle \nonumber \\
&=&\langle p|ab'_{11}\rangle \langle ab'_{11}|p\rangle+\langle p|ab'_{12}\rangle\langle ab'_{12}|p\rangle \nonumber \\
&=&p(A_1,B'_1)+p(A_1,B'_2)
\end{eqnarray}
which shows that the marginal law is satisfied in that case. So, we have proven that in case the marginal law is not satisfied for $AB$ and $AB'$, the vectors $I_{AB}|p\rangle$ and $I_{AB'}|p\rangle$ do not represent the same states in ${\mathbb C}^2 \otimes {\mathbb C}^2$.
\end{proof}
From the above theorem follows that for a typical Bell type experimental situation, there are four distinct isomorphisms of ${\mathbb C}^{4}$ with the tensor product ${\mathbb C}^{2}\otimes{\mathbb C}^{2}$, namely $I_{AB}$, $I_{AB'}$, $I_{A'B}$ and $I_{A'B'}$, depending the ON basis of ${\mathbb C}^{4}$ we identify with a typical ON basis of this tensor product.
Each one of them defines for the state $p$ of the compound entity a typical entangled state $I_{AB}|p\rangle$ (respectively $I_{AB'}|p\rangle$, $I_{A'B}|p\rangle$, and $I_{A'B'}|p\rangle$), which we call the $AB$-entanglement (respectively $AB'$, $A'B$, $A'B'$) representation of $p$. These entanglement representations absorb all of the entanglement in the state. Indeed, in a complete analogous way as proved for $AB$ in theorem \ref{unitaryentanglement}, we prove that $AB'$, $A'B$, and $A'B'$ are product measurements with respect to respectively $I_{AB'}$, $I_{A'B}$ and $I_{A'B'}$.
However, for example, $AB'$ is in general an entanglement measurements with respect to $I_{AB}$, and this counts for all arbitrary elections of one of the coincidence experiment and one of the isomorphisms. Moreover, if we confine ourselves to the tensor product space ${\mathbb C}^{2}\otimes{\mathbb C}^{2}$, where all measurements now are product measurements, also the dynamical evolutions connecting the measurements are products. However, although all entanglement is pushed in the state, a new price has to be paid, i.e. the representations of the state in ${\mathbb C}^{2}\otimes{\mathbb C}^{2}$ by the vectors $I_{AB}|p\rangle$, $I_{AB'}|p\rangle$, $I_{A'B}|p\rangle$ and $I_{A'B'}|p\rangle$ are different for each one of the coincidence measurements $AB$, $AB'$, $A'B$ and $A'B'$, in case the marginal law is violated. This means that ${\mathbb C}^{2}\otimes{\mathbb C}^{2}$ has become literally a contextual representation, where contextual states appear for each of the considered measurements, all entanglement being represented in these states.

We have now all elements available to sharply describe the new entanglement scheme that we have introduced. {\it Whenever the marginal law related to two coincidence measurements is violated, these two coincidence experiments cannot be product measurements with respect to one and the same isomorphism with the tensor product space, where entanglement is identified. Hence, at least one of them will be an entangled measurement if we insist on a description making use of one isomorphism. The dynamical evolution that switches between two measurements is an entangled evolution if the marginal law is not satisfied between these two measurements.
It is possible to adopt contextually different entanglement identifications, choosing a specific isomorphism for each coincidence experiments. Then all the entanglement can be pushed into the state, and the measurements and dynamical evolutions become products. However, the description in the tensor product space becomes explicitly contextual, in the sense that for each coincidence experiment a different state is used to represent the compound entity.}

We believe that when we consider an entanglement situation we are tempted to introduce the contextually defined different isomorphisms that make all measurements products again, also if the marginal law is violated, because our focus is on the end states after the measurements, rather than on the physical state space before the measurements. Indeed, we are tempted to name the end states of a coincidence measurement by using names of end states of single measurements. For example, we use {\it Horse} and {\it Whinnies}, which are end states of the single measurements on {\it Animal} and {\it Acts}, to indicate the end state {\it The Horse Whinnies} of the compound entity {\it The Animal Acts} collapsed to by the coincidence measurement. Hence, in human language, we are tempted to choose for the contextual representation, with measurements and evolutions remaining products, and states becoming contextual. We believe that also in physics this temptation exists, and comment on this in Sect. \ref{EPR_Bell}. In the following section we provide an explicit representation of our experimental data in complex Hilbert space.
 
\section{Entangled measurements and their representation\label{calculations}}
In this section we calculate explicitly the self-adjoint operators ${\cal E}_{AB}$, ${\cal E}_{AB'}$, ${\cal E}_{A'B}$ and ${\cal E}_{A'B'}$ representing the coincidence measurements $AB$, $AB'$, $A'B$ and $A'B'$, with the data that we collected in our experiments on {\it The Animal Acts}.

First we choose a representation for the state $p$ of {\it The Animal Acts} by the unit vector $|p\rangle=|ae^{i\alpha}, be^{i\beta},ce^{i\gamma},de^{i\delta}\rangle$ in the canonical base of ${\mathbb C}^{4}$, with $a=0.23$, $b=0.62$, $c=0.75$, $d=0$, $\alpha=13.93^\circ$, $\beta=16.72^\circ$, $\gamma=9.69^\circ$, $\delta=194.15^\circ$. This choice of $|p\rangle$ is not arbitrary. 
The state was found by comparing our data with a tensor product representation only using product measurements, and by means of a numerical optimization procedure. This means that it is a unit vector closets to a possible representation with only product measurements that we could identify numerically.
We do not describe this procedure here, since it is not the focus of this article, but details can be obtained with the authors for those interested.

Let us construct now ${\cal E}_{AB}$. 
We express the ON basis of eigenvectors of ${\cal E}_{AB}$ in the 
canonical basis of ${\mathbb C}^{4}$ as 
$|ab_{11}\rangle=|x_1 e^{i \theta_1}, y_1 e^{i\phi_1}, z_1 e^{i \chi_1}, t_1 e^{i \mu_1}\rangle$, 
$|ab_{12}\rangle=|x_2 e^{i \theta_2}, y_2 e^{i\phi_2},$ $z_2 e^{i \chi_2}, t_2 e^{i \mu_2}\rangle$, 
$|ab_{21}\rangle=|x_3 e^{i \theta_3}, y_3 e^{i\phi_3}, z_3 e^{i \chi_3}, t_3 e^{i \mu_3}\rangle$, and 
$|ab_{22}\rangle=|x_4 e^{i \theta_4}, y_4 e^{i\phi_4}, z_4 e^{i \chi_4},$ $t_4 e^{i \mu_4}\rangle$. In the state $p$ of {\it The Animal Acts}, represented by the unit vector $|p\rangle$, one has the following probabilities 
$p(A_1,B_1)=\langle p|ab_{11}\rangle\langle ab_{11}|p\rangle=|\langle ab_{11}|p\rangle|^{2}$, $p(A_1,B_2)=\langle p|ab_{12}\rangle\langle ab_{12}|p\rangle=|\langle ab_{12}|p\rangle|^{2}$, $p(A_2,B_1)=\langle p|ab_{21}\rangle\langle ab_{21}|p\rangle=|\langle ab_{21}|p\rangle|^{2}$, $p(A_2,B_2)=\langle p|ab_{22}\rangle\langle ab_{22}|p\rangle=|\langle ab_{22}|p\rangle|^{2}$.
The following is a solution
\begin{eqnarray}
&|ab_{11}\rangle=&|0 e^{i 71.38^\circ}, 0.15 e^{i26.63^\circ}, 0.17 e^{i 30.07^\circ}, 0.97 e^{i 263.57^\circ} \rangle\\
&|ab_{12}\rangle=&|0.09 e^{i 149.21^\circ}, 0.96 e^{i322.93^\circ}, 0.96 e^{i 327.81^\circ}, 0.19 e^{i 21.14^\circ} \rangle\\
&|ab_{21}\rangle=&|0.12 e^{i 13.34^\circ}, 0.97 e^{i5.44^\circ}, 0.17 e^{i 189.97^\circ}, 0.12 e^{i 62.08^\circ} \rangle\\
&|ab_{22}\rangle=&|0.99 e^{i 71.17^\circ}, 0.11 e^{i242.99^\circ}, 0.11 e^{i 69.46^\circ}, 0 e^{i 125.70^\circ} \rangle
\end{eqnarray}
which exactly reproduces the experimental data in Tab. 2. The self-adjoint operator 
${\cal E}_{AB}$ is given by
\begin{eqnarray}
{\cal E}_{AB}&=&|ab_{11}\rangle\langle ab_{11}|-|ab_{12}\rangle\langle ab_{12}|-|ab_{21}\rangle\langle ab_{21}|+|ab_{22}\rangle\langle ab_{22}| \nonumber\\
&=&\small
\left( \begin{array}{cccc}
0.952 & -0.207-0.030i	&	0.224+0.007i & 0.003-0.006i \\				
-0.207+0.030i & -0.930	&	0.028-0.001i	&	-0.163+0.251i \\				
0.224-0.007i & 0.028+0.001i & -0.916 & -0.193+0.266i \\
0.003+0.006i & -0.163-0.251i & -0.193-0.266i & 0.895				
\end{array} \right)
\normalsize
\end{eqnarray}
where we assumed that $\lambda_{HG}=\lambda_{BW}=1$ and $\lambda_{HW}=\lambda_{BG}=-1$
with the aim of directly measuring the expectation value. 
In a completely analogous way, we can construct the other self-adjoint operators. In the second measurement 
$AB'$, the ON basis of eigenvectors is given by
\begin{eqnarray}
&|ab'_{11}\rangle=&|0.65 e^{i 69.64^\circ}, 0.48 e^{i38.08^\circ}, 0.45 e^{i 31.37^\circ}, 0.37 e^{i 269.21^\circ} \rangle\\
&|ab'_{12}\rangle=&|0.11 e^{i 207.96^\circ}, 0.63 e^{i208.97^\circ}, 0.77 e^{i 18.61^\circ}, 0.05 e^{i 205.71^\circ} \rangle\\
&|ab'_{21}\rangle=&|0.69 e^{i 254.16^\circ}, 0.59 e^{i45.44^\circ}, 0.41 e^{i 28.36^\circ}, 0.04 e^{i 43.84^\circ} \rangle\\
&|ab'_{22}\rangle=&|0.27 e^{i 70.02^\circ}, 0.16 e^{i18.03^\circ}, 0.20 e^{i 33.61^\circ}, 0.93 e^{i 85.52^\circ} \rangle
\end{eqnarray}
while the self-adjoint operator 
${\cal E}_{AB'}$ is given by
\begin{eqnarray}
{\cal E}_{AB'}&=&|ab'_{11}\rangle\langle ab'_{11}|-|ab'_{12}\rangle\langle ab'_{12}|-|ab'_{21}\rangle\langle ab'_{21}|+|ab'_{22}\rangle\langle ab'_{22}|\nonumber\\
&=&\small
\left( \begin{array}{cccc}
-0.001	&				0.587+0.397i &	0.555+0.434i &	0.035+0.0259i	\\			
0.587-0.397i & -0.489 &	0.497+0.0341i &	-0.106-0.005i \\	
0.555-0.434i & 0.497-0.0341i & -0.503	&	0.045-0.001i \\	0.035-0.0259i &	-0.106+0.005i &	0.045+0.001i & 0.992	\end{array} \right)
\normalsize
\end{eqnarray}
In the third measurement 
$A'B$, the ON basis of eigenvectors is given by
\begin{eqnarray}
&|a'b_{11}\rangle=&|0.44 e^{i 80.05^\circ}, 0.73 e^{i48.97^\circ}, 0.48 e^{i 25.56^\circ}, 0.21 e^{i 274.18^\circ} \rangle\\
&|a'b_{12}\rangle=&|0.02 e^{i 207.84^\circ}, 0.55 e^{i211.53^\circ}, 0.83 e^{i 9.71^\circ}, 0.10 e^{i 208.15^\circ} \rangle\\
&|a'b_{21}\rangle=&|0.89 e^{i 261.73^\circ}, 0.39 e^{i50.87^\circ}, 0.24 e^{i 26.35^\circ}, 0 e^{i 44.62^\circ} \rangle\\
&|a'b_{22}\rangle=&|0.10 e^{i 71.50^\circ}, 0.13 e^{i19.94^\circ}, 0.17 e^{i 37.18^\circ}, 0.97 e^{i 84.23^\circ} \rangle
\end{eqnarray}
while the self-adjoint operator 
${\cal E}_{A'B}$ is given by
\begin{eqnarray}
{\cal E}_{A'B}&=&|a'b_{11}\rangle\langle a'b_{11}|-|a'b_{12}\rangle\langle a'b_{12}|-|a'b_{21}\rangle\langle a'b_{21}|+|a'b_{22}\rangle\langle a'b_{22}| \nonumber\\ 
&=&\small
\left( \begin{array}{cccc}
-0.587 &	0.568+0.353i	&	0.274+0.365i	&	0.002+0.004i \\																							
0.568-0.353i & 0.090	 &				0.681+0.263i & -0,110-0.007i \\				
0.274-0.365i &		0.681-0.263i & -0.484	&	0.150-0.050i \\				
0,002-0.004i & -0,110+0.007i & 0.150+0.050i &	0.981	\end{array} \right)
\normalsize
\end{eqnarray}
In the fourth measurement 
$A'B'$, the ON basis of eigenvectors is given by
\begin{eqnarray}
&|a'b'_{11}\rangle=&| 0.74 e^{i 272.32^\circ}, 0.02 e^{i42.02^\circ}, 0.62 e^{i 38.40^\circ}, 0.26 e^{i 334.31^\circ} \rangle\\
&|a'b'_{12}\rangle=&| 0.02 e^{i 32.17^\circ}, 0.31 e^{i353.95^\circ}, 0.36 e^{i 242.65^\circ}, 0.88 e^{i 356.07^\circ} \rangle\\
&|a'b'_{21}\rangle=&| 0.27 e^{i 278.36^\circ}, 0.87 e^{i65.99^\circ}, 0.31 e^{i 205.22^\circ}, 0.28 e^{i 223.08^\circ} \rangle\\
&|a'b'_{22}\rangle=&| 0.62 e^{i 114.51^\circ}, 0.39 e^{i81.45^\circ}, 0.62 e^{i 65.70^\circ}, 0.29 e^{i 327.92^\circ} \rangle
\end{eqnarray}
while the self-adjoint operator 
$ {\cal E}_{A'B'}$ is given by
\begin{eqnarray}
{\cal E}_{A'B'}&=&|a'b'_{11}\rangle\langle a'b'_{11}-|a'b'_{12}\rangle\langle a'b'_{12}|-|a'b'_{21}\rangle\langle a'b'_{21}|+|a'b'_{22}\rangle\langle a'b'_{22}|\nonumber\\ 
&=&\small
\left( \begin{array}{cccc}
0.854	&				0.385+0.243i & -0.035-0.164i &	-0.115-0.146i \\				
0.385-0.243i & -0.700	&	0.483+0.132i & -0.086+0.212i \\				
-0.035+0.164i &	0.483-0.132i & 0.542 &	0.093+0.647i \\				
-0.115+0.146i &	-0.086-0.212i &	0.093-0.647i &	-0.697	
\end{array} \right)
\normalsize
\end{eqnarray}
Our quantum-theoretic representation in ${\mathbb C}^{4}$ of the entangled measurements in our cognitive experiment is thus completed. In the next section we put forward how our analysis might have far reaching consequences in physics.

\section{Toward an application to EPR-Bell experiments\label{EPR_Bell}}
Our analysis in the previous sections for the conceptual combination {\it The Animal Acts} shows that, whenever Bell's inequalities are violated by experimental data and the marginal distribution law does not hold, then any quantum model in Hilbert space describing these data should include not only entangled states, but also entangled measurements and entangled evolutions, due to this marginal law violation. 
This result is absolutely general, since it does not depend on the fact that conceptual entities are considered, and hence our 
analysis can be successfully applied to physical quantum entities as well. In \cite{aertssozzo2013a,aertssozzo2013b,aerts2013} we analyze different examples and aspects of physical as well as cognitive situations violating Bell's inequalities and the marginal law, and its relation to signalling, and construct explicitly the corresponding entangled measurements.
Moreover, our analysis is also valid for the situation of the EPR-Bell experiments testing nonlocality of natural processes and their compatibility with quantum-mechanical predictions \cite{aspect1982,weihsetal1998}. In these experiments, one typically considers a pair of quantum particles (e.g., photons) prepared in a suitable entangled state (e.g., the singlet spin state) and travelling in opposite directions. Each particle is then measured by entering a suitably oriented apparatus (e.g., polarizer) that performs a (e.g., polarization) measurement. This means that a product measurement is considered in which the two measuring apparatuses are spatially separated. The obtained results are then statistically analyzed and the violation of Bell's inequalities is finally considered as an experimental confirmation of nonlocality.
What is implicitly assumed is that, since experimental data seem to agree with the predictions of quantum mechanics and since only product measurements are considered in these EPR-Bell tests, the marginal distributions of two sets of these measurements should coincide, as a consequence of the the measurements being products. This point can however be questioned. Indeed, different experiments, including, in particular, Aspect's \cite{aspect1982} and Weihs' \cite{weihsetal1998} showed that {\it anomalies} appear in the marginal distributions of the measurements that are considered. These anomalies are at the level of probabilities and not at the level of expectation values (which actually enter Bell's inequalities) and have been analyzed pointing out that quantum predictions could fail in these EPR-Bell tests \cite{ak1,ak2}. 

Coming to our results in this paper, we explicitly want to put forward the possible hypothesis that the anomalies above could have a different origin and instead suggest a `stronger form of entanglement' also including measurements and evolutions. Indeed, the observed violation of the marginal distribution law at least invoke the intriguing possibility that measurements and evolutions are entangled, and that the operator-linked entanglement, which mathematically appears in a natural way in standard quantum theory, is also present in physics, even in the archetypical entanglement situation of coupled spins.  If this would be the case, we can apply the new entanglement scheme we developed in this paper to deal with entanglement in polarization measurements. Indeed, let us suppose that data are collected for a given standard EPR-Bell experiment and that non-negligible violations of marginal law are observed. Following our investigation of the quantum representation of our cognitive example, the following possibilities then open up the following possibilities: 

(i) we can work out a quantum representation in ${\mathbb C}^{4}$, assuming that the initial state is the singlet spin state; (ii) we can assume that four non-product self-adoint operators representing the entangled measurements exist that fit experimental data in the singlet spin state; (iii) we can provide an explicit representation of these measurements in terms of non-product self-adjoint operators; (iv) we can quantify the degree of entanglement of these measurements evaluating their deviations from product self-adjoint operators in ${\mathbb C}^2\otimes {\mathbb C}^2$; (v) we can study the connections between parameter dependence, {\it no-signaling condition} and the compatibility with relativity theory \cite{peres2004,gisin2006}.
 
While admitting that what proceeds is very challenging, we 
do not put forward a complete treatment of this `entanglement in the EPR-Bell experiments' in the present paper. But, we plan to investigate this potentially relevant aspect of quantum physics in the next future in detail. If the speculative hypothesis of the presence of entangled measurements also in physics would turn out to be true, it would constitute a first case of how insights gained in quantum cognition influence quantum physics in the physical realm.

\vspace{-0.5cm}

\end{document}